\numberwithin{equation}{section}
\newtheorem{theorem}{Theorem}
\newtheorem{lemma}[theorem]{Lemma}
\DeclareMathOperator*{\argmax}{arg\,max}
\begin{document}
\title{Efficient iterative policy optimization}
\author{Nicolas Le Roux\\Criteo Research\\\texttt{nicolas@le-roux.name}}
\date{\today}
\maketitle
\begin{abstract}
We tackle the issue of finding a good policy when the number of policy updates is limited. This is done by approximating the expected policy reward as a sequence of concave lower bounds which can be efficiently maximized, drastically reducing the number of policy updates required to achieve good performance. We also extend existing methods to negative rewards, enabling the use of control variates.
\end{abstract}

\section{Introduction}
\label{sec:introduction}
Recently, reinforcement learning has seen a surge in popularity, in part because of its successes in playing Atari games~\citep{MnihKSGAWR13} and Go~\citep{Silver16}. Due to its ability to act in settings where the actions taken influence the environment and, more generally, the input distribution of examples, reinforcement learning is now used in other domains, such as online advertising.

%Indeed, in online advertising, companies compete for the right to display ads to a user, in the hope of eliciting a click and, later, a sale on the merchant's website. This competition takes the form of real-time auctions, where all players enter a bid and the highest bidder wins, paying a price depending, amongst other things, on the other players' bids. Since it is sometimes needed to display multiple ads to generate a sale, this problem can be cast as an episodic reinforcement learning problem, where the state is comprised of both the users and the competitors' strategies, and the action is the actual bid we wish to place.

The goal is to learn a good policy, i.e.\ a good mapping from states to actions, which will maximize the final score, in the case of Atari games, the probability of winning, in Go, or the number of sales, in online advertising. We have at our disposal logs of past events, where we know the states we were in, the actions we took and the resulting rewards we obtained. In this paper, we shall focus on how to efficiently use these logs to obtain a good policy. In some cases, in addition to (state, action, reward) triplets, we have access to a teacher which provides the optimal, or at least a good, action for a given state. The use of such a teacher to find a good initial policy is outside the scope of this paper.

There are many ways to learn good policies using past data. The two most popular are Q-learning and direct policy optimization. In Q-learning~\citep{sutton1998reinforcement}, we are trying to learn a mapping from a (state, action) pair to the reward. Given this mapping and a state, we can then find the action which leads to the maximal predicted reward. This method has been very successful, especially when the action space is small, since we need to test all the actions, and the reward somewhat predictable, since taking the maximum is unstable and a small error can lead to suboptimal actions.

Direct policy optimization, rather than trying to estimate the value of a (state, action) pair, directly parameterizes a policy, i.e.\ a conditional distribution over actions given the current state. More precisely, and using the notation from~\citet{kober2014learning}, we wish to maximize the expected return of a policy $p( \cdot | \theta)$ with parameters $\theta$, i.e.
\begin{align}
\label{eq:policy_loss}
J(\theta) &= \int_{\mathcal{T}} p(\tau | \theta)R(\tau)\; d\tau \; ,
\end{align}
where $\mathcal{T}$ is the set of all paths $\tau$ and $R(\tau)$ is the reward associated with path $\tau$.

A rollout $\tau = [s_{1:T+1}, a_{1:T}]$ is a series of states $s_{1:T+1} = [s_1, \ldots, s_{T+1}]$ and actions $a_{1:T} = [a_1, \ldots, a_{T}]$. $p(\tau | \theta)$ is the probability of rollout $\tau$ when using a policy with parameters $\theta$ and $R(\tau)$ is the aggregated return of $\tau$. If we make the Markov assumption that a state only depends on the previous state and the action chosen, we have $\displaystyle p(\tau | \theta) = p(s_1) \prod_{t=1}^T p(s_{t+1}|s_t, a_t)\pi(a_t | s_t, t, \theta)$ where $p(s_{t+1}|s_t, a_t)$ is the next state distribution and is independent of our policy. The action space can be discrete or continuous.

Without the ability to exactly compute $J$\footnote{Computing $J$ would require visiting every possible $\tau$ at least once, which is impossible, even for moderately long rollouts.}, we must resort to sampling to get an estimate of both $J$ and its gradient. These samples can come from $p(\cdot | \theta)$ or from another distribution. In the latter case, we need to use importance sampling to keep our estimator unbiased\footnote{Using a biased estimator can be useful but this is outside the scope of this paper.}. Whether they use importance sampling or not, all methods which directly optimize the policy rely on iterative procedures. First, rollouts are performed under a policy to gather samples. Then, these samples are used to update the policy, which is in turn used to gather new samples, until convergence of the policy. These methods continuously gather new samples, mostly because the updates to the policy are more reliable when they are based on fresh samples. However, in a production environment, as we will see in Sec.~\ref{sec:display_advertising}, we will typically release a new policy to many users at once, gathering millions or billions of samples, but the delay between two updates of a policy is of the order of hours or even days. Thus, there is a strong need for policy learning techniques which can achieve a good performance with a limited number of policy updates. There is an analogous issue in robotics where each new rollout induces wear and tear on the robot, making such a method which limits the number of rollouts desirable.

In this paper, we shall thus present a method dedicated to achieving high performance while limiting the number of different policies under which samples are gathered. Sec.~\ref{sec:related_work} reviews the relevant literature. Sec.~\ref{sec:concave} presents a first version of our algorithm, proving its theoretical efficiency and providing a common framework to several existing techniques. Then, Sec.~\ref{sec:convex} shows how the positiveness assumption for the rewards can slow down learning and proposes an improvement which circumvents the issue. Sec.~\ref{sec:experiments} shows results of the proposed algorithm on both a synthetic dataset and a real-life, large scale dataset in online advertising. Finally, Sec.~\ref{sec:conclusion} reflects on the current state and the future venues of research.

\section{Display advertising}
\label{sec:display_advertising}
Retargeting is a type of advertising where ads are displayed to users who have already expressed interest in one or multiple products, generally by browsing on merchants' websites. Since the information used to know which ad to display are not related to a query, like in search advertising, these ads can be displayed on any website, for instance news sites or personal pages. More precisely, every time a user lands on such a website, the website contacts an ad-exchange platform which runs a real-time auction. The highest bidder gets the right to display an ad for this particular user and pays a price depending on many factors, including the opponents' bids. If the user then clicks on the ad, the retargeter is paid by the merchant whose ad was displayed. To maximize its revenue, the retargeter must thus only display ads leading to a click, and do so at the lowest possible price. Historically, these auctions were second-price, which means that the price paid by the highest bidder was the second highest bid. From a bidder perspective, the optimal strategy was straightforward as the optimal bid was the expected gain of displaying an ad. However, ad-exchanges have recently moved to other types of auctions, where the optimal strategy depends on the (unknown) bids of the other bidders. Worse, the exact type of auction is unknown to the bidders who only know the price they pay when they win the auction.

The bidding problem thus fits nicely in the reinforcement learning framework where the state is the set of information known about the user and the current website, the action is the bid and the reward is the payment (if there is a click) minus the cost of displaying the ad. As the reward depends mostly on whether there is a click or not, an event which is highly unpredictable, techniques such as doubly robust estimation~\citep{dudik2011doubly} or based on carefully crafted Q-functions~\citep{LillicrapHPHETS15,SchulmanMLJA15,GuLSL16} are unlikely to yield significant improvements.%In this setting, a policy is a function which maps from states to a bid.

There is another major difference with other reinforcement learning works. For quality control, a new policy can only be put in production every few hours or even days. Since large advertising companies display several billion ads each day, each new policy is used to gather about a billion samples. We are thus in a very specific setting where the number of samples is very large but the number of policies with which samples are gathered is limited.

We will now review the existing literature and show how no existing work addresses the constraints we face. We will then present our solution which is both simple and leads to good policies. To show its efficiency, we report results on both the Cartpole problem, a synthetic problem common in reinforcement learning, and a real-world example.

\section{Related work}
\label{sec:related_work}
We review here some of the most common techniques in reinforcement learning, limiting ourselves to those who try to directly optimize the policy.

The first such method is REINFORCE~\citep{williams1992simple}, which performs a single gradient step between two rollouts of the policy. This method has multiple issues. First, one has to do rollouts after each update, ultimately resulting in many rollouts before reaching a good solution. This is further emphasized with the potential poor quality of the gradient update which is not insensitive to a reparametrization of the parameter space. Finally, as with any stochastic method, the choice of the stepsize has a strong influence of the convergence speed. Each of these problems has been treated in separate works. The need to perform rollouts after each update was alleviated by using importance sampling policy gradient~\citep{swaminathan2015counterfactual}. The update direction can be improved by using natural gradient~\citep{Amari1998,kakade2001natural} and doing a line search helps in choosing a correct stepsize~\citep{jie2010connection}. These improvements can be computationally expensive and the additional hyperparameters make them less suited to a production environment where simplicity and robustness are key.

Another line of research used concave lower bounds on $J(\theta)$, which could then be optimized using off-the-shelf classifiers such as L-BFGS~\citep{liu1989limited}. Examples of such methods are PoWER~\citep{kober2009policy} and Natural actor-critic~\citep{peters2008natural}. These bounds were obtained using an analogy with EM~\citep{dempster1977maximum,minka1998expectation} which required the rewards $R(\tau)$ to be positive. In settings where multiple policies can achieve high accuracy,~\citet{neumann2011variational} proposed another EM-based method which focuses on one of these policies rather than trying to loosely cover all of them, at the expense of a larger computational cost.

%In practice, we do not have access to the full distribution but rather to a set of $N$ samples. Thus, we shall estimate $J$ using Monte-Carlo:
%\begin{align}
%	\hat{J}(\theta) = \frac{1}{N}\sum_i R(\tau_i) \frac{p_(\tau_i| \theta)}{p(\tau_i | \theta_0)} \quad &, \quad \tau_i \sim p(\tau | \theta_0) \; ,
%\end{align}
%where $p(\tau | \theta_0)$ is the probability of rollout $\tau$ under the distribution used to generate samples. This is the standard importance sampling trick commonly used in policy gradient~\citep{sutton1999policy}.
%
%One possibility would be to optimize $\hat{J}(\theta)$ until convergence before performing a new rollout but this raises two issues. First, $\hat{J}$ is not necessarily a concave function of $\theta$. While there are many optimizers suited to nonconcave optimization, they tend to require more fine-tuning and be slower than those tailored to concave optimization. Second, even though $\hat{J}$ is an unbiased estimator, its variance depends on $\theta$. Blindly maximizing it may, and often will, end up in parts of the space with very high variance. In practice, this means that overoptimizing $\hat{J}$ might yield a decrease in the true performance.

We will show how these works can be extended to better optimize the policy between two updates. We will then show how the positiveness requirement hurts the optimization, then propose an extension which allows us to use any reward.

Since, in practice, we do not have access to the full distribution but rather to a set of $N$ samples, we shall optimize a Monte-Carlo estimate of $J$:
\begin{align}
	\hat{J}(\theta) = \frac{1}{N}\sum_i R(\tau_i) \frac{p(\tau_i| \theta)}{p(\tau_i | \theta_0)} \quad &, \quad \tau_i \sim p(\tau | \theta_0) \; ,
\end{align}
where $p(\tau | \theta_0)$ is the probability of rollout $\tau$ under the distribution used to generate samples. This is the standard importance sampling trick commonly used in policy gradient~\citep{sutton1999policy}.

\section{Concave approximation of the expected loss}
\label{sec:concave}
In this section, we assume that all returns are nonnegative\footnote{Or at least bounded below, in which case they need to be adequately shifted.}. Due to this nonnegativity, the nonconcavity in $J$ stems from the nonconcavity of each $p(\tau | \theta)$. However, if $p(\tau | \theta)$ belongs to the exponential family~\citep{wainwright2008graphical}, then it is a log-concave function of its parameters and $\log p(\tau | \theta)$ is a concave function of $\theta$. This suggests the following lower bound:
\begin{lemma}
\label{lemma:general_lower_bound}
Let
\begin{align}
p_q(\tau | \theta) &= q(\tau) \left(1 + \log \frac{p(\tau | \theta)}{q(\tau)} \right)
\label{eq:p_q}
\end{align}
with $q$ such that $q(\tau) \neq 0$ when $p(\tau | \theta) \neq 0$. Then we have $p_q(\tau | \theta) \leq p(\tau | \theta)$.
\end{lemma}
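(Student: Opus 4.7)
The plan is to reduce the inequality to the standard tangent-line bound on the logarithm, namely $\log u \leq u - 1$ for all $u > 0$, with equality iff $u = 1$. This is the concavity-of-$\log$ inequality and is the only nontrivial ingredient. Applying it to the ratio $u = p(\tau\mid\theta)/q(\tau)$ and rearranging gives $1 + \log\bigl(p(\tau\mid\theta)/q(\tau)\bigr) \leq p(\tau\mid\theta)/q(\tau)$, and then I would multiply through by $q(\tau) \geq 0$ to recover the claim $p_q(\tau\mid\theta) \leq p(\tau\mid\theta)$.

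More concretely, the steps in order would be: (i) recall or briefly justify $\log u \leq u - 1$ (either by concavity of $\log$, noting that $u - 1$ is its tangent at $u = 1$, or equivalently by $e^{u-1} \geq u$); (ii) fix an arbitrary $\tau$ and split into cases on $q(\tau)$; (iii) in the main case $q(\tau) > 0$, substitute $u = p(\tau\mid\theta)/q(\tau) > 0$ into the tangent-line inequality and multiply by $q(\tau)$, obtaining exactly $q(\tau)\bigl(1 + \log(p(\tau\mid\theta)/q(\tau))\bigr) \leq p(\tau\mid\theta)$.

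The only subtlety, and what I would flag as the mild obstacle, lies in the edge cases. If $p(\tau\mid\theta) = 0$ while $q(\tau) > 0$, the logarithm diverges to $-\infty$, so $p_q(\tau\mid\theta) = -\infty \leq 0 = p(\tau\mid\theta)$ and the bound holds trivially (this also explains why the bound is genuinely weaker than $p$ in general). If $q(\tau) = 0$, the hypothesis forces $p(\tau\mid\theta) = 0$ as well, and one interprets $p_q(\tau\mid\theta) := 0$ via the convention $0 \cdot \log 0 = 0$, so both sides agree. Once these boundary behaviours are nailed down, the inequality is established pointwise in $\tau$, which is exactly the statement of the lemma.
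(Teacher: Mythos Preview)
Your proof is correct and follows essentially the same approach as the paper: both rely on the elementary inequality $x \geq 1 + \log x$ (equivalently, $\log u \leq u - 1$) applied to the ratio $p(\tau\mid\theta)/q(\tau)$, then multiply through by $q(\tau)$. Your treatment is actually more careful than the paper's, since you explicitly handle the edge cases $p(\tau\mid\theta)=0$ and $q(\tau)=0$, which the paper leaves implicit.
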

\begin{proof}
\begin{align*}
p(\tau | \theta) &= q(\tau)\frac{p(\tau | \theta)}{q(\tau)}\\
								&\geq q(\tau) \left(1 + \log \frac{p(\tau | \theta)}{q(\tau)} \right)\\
								&= p_q(\tau | \theta)\; .
\end{align*}
The second line stems from the inequality $ x \geq 1 + \log x$.
\end{proof}

Lemma~\ref{lemma:general_lower_bound} shows that, regardless of the function $q$ chosen, $p_q( \tau | \theta)$ is a lower bound of $p(\tau | \theta)$ for any value of $\theta$. Thus, provided that $p(\tau | \theta)$ belongs to the exponential family, we have obtained a log-concave lower bound. Lemma~\ref{lemma:general_lower_bound}, however, does not guarantee the quality of that lower bound. This is addressed by the following lemma:
\begin{lemma}
\label{lemma:tight_lower_bound}
If there is a $\nu$ such that $q(\tau) = p(\tau | \nu)$, we have
\begin{align*}
p_q(\tau | \nu) = p(\tau | \nu) \quad , \quad
\frac{\partial p_q(\tau | \theta)}{\partial \theta}\bigg|_{\theta=\nu} = \frac{\partial p(\tau | \theta)}{\partial \theta}\bigg|_{\theta=\nu} \;.
\end{align*}
\end{lemma}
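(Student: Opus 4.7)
The plan is to substitute the hypothesis $q(\tau)=p(\tau|\nu)$ directly into the definition \eqref{eq:p_q} and evaluate the two quantities in turn; no clever manipulation should be needed, since the content of the lemma is simply the familiar fact that the tangent bound derived from $x\ge 1+\log x$ is tight (in both value and first derivative) precisely at the point where the linearization is taken.

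For the first equality, I would plug $\theta=\nu$ into $p_q(\tau|\theta)=q(\tau)\bigl(1+\log\tfrac{p(\tau|\theta)}{q(\tau)}\bigr)$. Under the assumption $q(\tau)=p(\tau|\nu)$, the ratio inside the logarithm equals $1$, the logarithm vanishes, and one is left with $p_q(\tau|\nu)=q(\tau)=p(\tau|\nu)$.

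For the derivative equality, I would differentiate $p_q(\tau|\theta)$ with respect to $\theta$ while keeping in mind that $q(\tau)$ does not depend on $\theta$. The chain rule gives
\begin{equation*}
\frac{\partial p_q(\tau|\theta)}{\partial\theta}
= q(\tau)\cdot\frac{1}{p(\tau|\theta)/q(\tau)}\cdot\frac{1}{q(\tau)}\frac{\partial p(\tau|\theta)}{\partial\theta}
= \frac{q(\tau)}{p(\tau|\theta)}\frac{\partial p(\tau|\theta)}{\partial\theta}.
\end{equation*}
Setting $\theta=\nu$ and using $q(\tau)=p(\tau|\nu)$ makes the prefactor $q(\tau)/p(\tau|\nu)$ equal to $1$, which gives the claimed identity.

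There is essentially no hard step: the only thing to be mildly careful about is the support condition from Lemma~\ref{lemma:general_lower_bound} (that $q(\tau)\ne 0$ whenever $p(\tau|\theta)\ne 0$), which ensures that the ratio $p(\tau|\theta)/q(\tau)$ and the reciprocal appearing in the differentiation are well defined in a neighborhood of $\nu$, so that the derivative computation is legitimate. Once that is noted, both statements follow from a one-line substitution.
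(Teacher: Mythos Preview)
Your proof is correct and follows essentially the same approach as the paper: substitute $\theta=\nu$ directly for the value equality, and for the derivative equality differentiate $p_q(\tau|\theta)$ to obtain $\dfrac{q(\tau)}{p(\tau|\theta)}\dfrac{\partial p(\tau|\theta)}{\partial\theta}$ (the paper writes this via $q(\tau)\,\partial_\theta\log p(\tau|\theta)$) and then set $\theta=\nu$. Your extra remark on the support condition is a small but welcome addition beyond what the paper records.
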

\begin{proof}
$p_q(\tau | \nu) = p(\tau | \nu)$ is immediate when setting $\theta = \nu$ in Eq.~\ref{eq:p_q}.
Deriving $p_q(\tau | \theta)$ with respect to $\theta$ yields
\begin{align*}
\frac{\partial p_q(\tau | \theta)}{\partial \theta} &= p(\tau | \nu)\frac{\partial \log p(\tau | \theta)}{\partial \theta}\\
&= \frac{p(\tau | \nu)}{p(\tau | \theta)}\frac{\partial p(\tau | \theta)}{\partial \theta} \; .
\end{align*}
Taking $\theta = \nu$ on both sides yields $\frac{\partial p_q(\tau | \theta)}{\partial \theta}\bigg|_{\theta=\nu} = \frac{\partial p(\tau | \theta)}{\partial \theta}\bigg|_{\theta=\nu}$.
\end{proof}

To simplify further notations, we will write directly
\begin{align}
p_{\nu}(\tau | \theta) &= p(\tau | \nu) \left(1 + \log \frac{p(\tau | \theta)}{p(\tau | \nu)} \right)
\end{align}
to explicitly show the dependency of the bound on the parameter $\nu$.

The following result is a direct consequence of these two lemmas:
\begin{lemma}{(lower bound of the expected reward estimator):}
\label{lemma:full_lower_bound}
Let
\begin{align}
\label{eq:lower_bound}
\hat{J}_{\nu}(\theta) &= \frac{1}{N}\sum_i R(\tau_i) \frac{p(\tau_i | \nu)}{p(\tau_i | \theta_0)} \left(1 + \log\frac{p(\tau_i |\theta)}{p(\tau_i | \nu)}\right) \;.
\end{align}
Then we have $\displaystyle
\hat{J}_{\nu}(\theta) \leq \hat{J}(\theta) \enskip \forall \theta \quad , \quad \hat{J}_{\nu}(\nu) = \hat{J}(\nu) \quad , \quad
\frac{\partial \hat{J}_{\nu}(\theta)}{\partial \theta}\bigg|_{\theta=\nu} = \frac{\partial \hat{J}(\theta)}{\partial \theta}\bigg|_{\theta=\nu}
$.
Further, if $p(\tau | \cdot)$ is a log-concave function, then $\hat{J}_{\nu}$ is concave for any $\nu$.
\end{lemma}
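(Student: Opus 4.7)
The plan is to reduce the lemma to a per-sample application of Lemmas~\ref{lemma:general_lower_bound} and~\ref{lemma:tight_lower_bound} with the choice $q(\tau) = p(\tau \mid \nu)$, after rewriting $\hat{J}_\nu$ in a convenient form. Specifically, I would observe that
\begin{align*}
\hat{J}_\nu(\theta) = \frac{1}{N}\sum_i \frac{R(\tau_i)}{p(\tau_i \mid \theta_0)}\, p_\nu(\tau_i \mid \theta), \qquad \hat{J}(\theta) = \frac{1}{N}\sum_i \frac{R(\tau_i)}{p(\tau_i \mid \theta_0)}\, p(\tau_i \mid \theta).
\end{align*}
The weights $R(\tau_i)/p(\tau_i \mid \theta_0)$ are nonnegative, because the rewards are nonnegative by the standing assumption of Section~\ref{sec:concave} and $p(\tau_i \mid \theta_0) > 0$ since the $\tau_i$ are drawn from this distribution.

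For the inequality $\hat{J}_\nu(\theta) \leq \hat{J}(\theta)$, I would apply Lemma~\ref{lemma:general_lower_bound} termwise to obtain $p_\nu(\tau_i \mid \theta) \leq p(\tau_i \mid \theta)$, then multiply by the nonnegative weight and sum. For the tightness $\hat{J}_\nu(\nu) = \hat{J}(\nu)$ and the gradient-matching identity at $\theta = \nu$, I would invoke Lemma~\ref{lemma:tight_lower_bound} on each $\tau_i$: the value equality is immediate from the first part of that lemma, and the gradient equality follows by linearity of differentiation after multiplying each per-sample gradient equality by the same nonnegative constant.

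For concavity, I would note that
\begin{align*}
p_\nu(\tau_i \mid \theta) = p(\tau_i \mid \nu)\bigl(1 + \log p(\tau_i \mid \theta) - \log p(\tau_i \mid \nu)\bigr)
\end{align*}
is an affine function of $\log p(\tau_i \mid \theta)$ with nonnegative leading coefficient $p(\tau_i \mid \nu) \geq 0$. Hence if $\log p(\tau_i \mid \cdot)$ is concave, so is each $p_\nu(\tau_i \mid \cdot)$, and $\hat{J}_\nu$ inherits concavity as a nonnegative linear combination of concave functions.

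I do not expect a significant obstacle: once Lemmas~\ref{lemma:general_lower_bound} and~\ref{lemma:tight_lower_bound} are in hand, all three identities follow from termwise application and from the sign of $R(\tau_i)/p(\tau_i \mid \theta_0)$. The only care needed is to verify that the support condition of Lemma~\ref{lemma:general_lower_bound} applies with $q = p(\,\cdot\mid\nu)$, i.e.\ that $p(\tau \mid \nu) \neq 0$ whenever $p(\tau \mid \theta) \neq 0$; this is a mild absolute-continuity condition on the parametric family, of the same flavor as the one already implicit in the use of the importance-sampling estimator $\hat{J}$.
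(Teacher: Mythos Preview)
Your proposal is correct and follows the same approach as the paper: observe that the per-sample weights $R(\tau_i)/p(\tau_i\mid\theta_0)$ are nonnegative, then apply Lemmas~\ref{lemma:general_lower_bound} and~\ref{lemma:tight_lower_bound} termwise and combine. The paper's own proof is a terse one-liner (``the sum of lower bounds being a lower bound''), whereas you spell out each conclusion separately and also make the concavity argument explicit, but the underlying idea is identical.
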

\begin{proof}
Since each $R(\tau_i)$ is nonnegative, so is the ratio $\frac{R(\tau_i)}{p(\tau_i | \theta_0)}$. The sum of lower bounds being a lower bound, this concludes the proof.
\end{proof}

It is now worth going into more detail on the three parameters of Eq.~\ref{eq:lower_bound}:
\begin{itemize}
\item $\theta$ is the current value of the parameter we are trying to optimize over;
\item $\theta_0$ is the parameter value used to gather samples;
\item $\nu$ is the parameter used to create the lower bound. Any value of $\nu$ is valid.
\end{itemize}

There are two special cases of this bound. First, when $\nu = \theta = \theta_0$, this bound becomes an equality and we recover the policy gradient of~\citet{williams1992simple}. However, this equality only holds for the first update of $\theta$. Second, a more interesting case is $\nu = \theta_0 \neq \theta$. In this case, Eq.~\ref{eq:lower_bound} simplifies and we get
\begin{align}
\label{eq:lower_bound_power}
\hat{J}_{\theta_0}(\theta) &= \frac{1}{N}\sum_i R(\tau_i) \left(1 + \log\frac{p(\tau_i |\theta)}{p(\tau_i | \theta_0)}\right) \;.
\end{align}
This bound is used by multiple authors~\citep{dayan1997using,peters2007reinforcement,peters2008natural,kober2014learning,SchulmanHWA15} and has the attractive property that it is tight at the beginning of the optimization since we have $\theta = \theta_0$. When we optimize this bound without ever changing the value of $\nu$, we end up with exactly the PoWER algorithm. However, as we optimize $\theta$, this bound becomes looser and it might be useful to change the value of $\nu$.

This suggest an iterative scheme where, after the optimization of Eq.~\ref{eq:lower_bound_power} has ended in $\theta = \theta_1$, we recompute the bound of Eq.~\ref{eq:lower_bound} with $\nu = \theta_1$. This yields an iterative version of the PoWER algorithm as described in Algorithm~\ref{alg:iter}.

\begin{algorithm}
\label{alg:iter}
\caption{Iterative PoWER}
%\begin{algorithmic}
\KwData{Rewards $R(\tau_i)$, probabilities $p(\tau_i | \theta_0)$, initial parameters $\theta_0$}
\KwResult{Final parameters $\theta_T$}
\For{t = 1 \KwTo T}
{
\begin{align*}
\displaystyle\theta_{t} &= \argmax_{\theta} \hat{J}_{\theta_{t-1}}(\theta)\\
&= \argmax_{\theta}\sum_i R(\tau_i) \frac{p(\tau_i | \theta_{t-1})}{p(\tau_i | \theta_0)} \left(1 + \log\frac{p(\tau_i |\theta)}{p(\tau_i | \theta_{t-1})}\right)
\end{align*}
}
%\end{algorithmic}
\end{algorithm}
%\subsection*{Iterative PoWER as multiple steps of PoWER}
%Other methods using an EM-like bound only perform one iteration between consecutive samplings. Thus, they use the following procedure:
%\begin{algorithm}
%\label{alg:power_procedure}
%%\begin{algorithmic}
%\For{t = 0 \KwTo T}
%{
%Gather samples from the distribution $p(\tau_i | \theta_t)$\\
%Find $\theta_{t+1} = \argmax_{\theta} \sum_i R(\tau_i) \left(1 + \log\frac{p(\tau_i |\theta)}{p(\tau_i | \theta_t)}\right)$
%}
%%\end{algorithmic}
%\end{algorithm}
%
%In contrast, performing multiple steps of EM between samplings yields the following procedure:
%\begin{algorithm}
%\label{alg:iterative_power_procedure}
%%\begin{algorithmic}
%Gather samples from the distribution $p(\tau_i | \theta_0)$\\
%Find $\theta_1 = \argmax_{\theta} \sum_i R(\tau_i) \left(1 + \log\frac{p(\tau_i |\theta)}{p(\tau_i | \theta_0)}\right)$\\
%\For{t = 1 \KwTo T}
%{
%Find $\theta_{t+1} = \argmax_{\theta} \sum_i R(\tau_i) \frac{p(\tau_i | \theta_t)}{p(\tau_i | \theta_0)}\left(1 + \log\frac{p(\tau_i |\theta)}{p(\tau_i | \theta_t)}\right)$
%}
%%\end{algorithmic}
%\end{algorithm}

We recall that PoWER is equivalent to Algorithm~\ref{alg:iter} but with $T = 1$. As we shall see in the experiments, larger values of $T$ lead to vast improvements.

One can also see that Algorithm~\ref{alg:iter} performs the same optimizations as the PoWER algorithm where new samples would be gathered at each iteration,  with the difference that importance sampling is used rather than true samples. Thus, in the spirit of off-policy learning, we have replaced extra sampling with importance sampling. When, and only when, variance starts to be too high, can we sample from the new distribution.

%However, as the distributions $p(\cdot | \theta_k)$ will move further away from $p(\cdot | \theta_0)$, we need to control the variance of the estimator. This will be covered in section~\ref{sec:controlling_variance}.

\section{Convex upper bound}
\label{sec:convex}
% A single negative reward can lead to a terrible lower bound.
% Also, to reduce the variance, we might want to use control variates, leading to negative rewards.
Lemma~\ref{lemma:full_lower_bound} requires positive returns. This has two undesirable consequences. First, this can lead to very slow convergence. One can see this by creating a setting where all rollouts lead to a positive return except for one which leads to a return of $- \beta < 0$. To maintain the positivity of the returns, we need to shift all the returns by $\beta$ which does not change the optimal policy using the following transformation: $\displaystyle
J(\theta) = \int_{\mathcal{T}} p(\tau | \theta)R(\tau)\; d\tau = \int_{\mathcal{T}} p(\tau | \theta)(R(\tau) + \beta)\; d\tau - \beta$. We may now apply lemma~\ref{lemma:full_lower_bound} and optimize the following lower bound:
\begin{align*}
J^{\beta}_{\nu}(\theta) &= \int_{\mathcal{T}} p(\tau | \nu)\left(1 + \log \frac{p(\tau | \theta)}{p(\tau | \nu)}\right)(R(\tau) + \beta)\; d\tau - \beta\; .
\end{align*}
Without the rollout with a negative return, the returns would not need to be shifted by $\beta$ and we could have optimized $J^{0}_{\nu}(\theta)$ instead.
The difference between the two is equal to $\displaystyle
J^{\beta}_{\nu}(\theta) - J^0_{\nu}(\theta) %&= \beta \int_{\mathcal{T}} p(\tau | \nu)\log \frac{p(\tau | \theta)}{p(\tau | \nu)} \; d\tau\\
= -\beta KL(p(\tau | \nu) || p(\tau | \theta))$
where KL is the Kullback-Leibler divergence, which encourages $p(\tau | \theta)$ to be close to $p(\tau | \nu)$. Hence, one rollout with a negative return would slow down our optimization with a regularization term proportional to that return. A simple heuristic would be to discard such rollouts but we would lose all guarantees about the improvement of the expected return.

Further, the positivity of the returns precludes the use of control variates which are especially useful when the shifted rewards are approximately centered on 0. Thus, it prevents us from benefiting of all the existing techniques based around these control variates which would help reducing the variance.

The positivity assumption is required since we multiply our lower bound with the returns. If, instead, we have convex upper bounds of $p(\tau | \theta)$, then this would provide us with a concave lower bound whenever it is associated with a negative return. Lemma~\ref{lemma:upper_bound} provides such a bound.
\begin{lemma}
\label{lemma:upper_bound}
Let
\begin{align}
u_{\nu}(\tau | \theta)&= p(\tau | \nu)\exp\left[(\theta - \nu)^\top \frac{\partial \log p(\tau | \theta)}{\partial \theta}\bigg|_{\theta=\nu}\right] \; ,
\end{align}
where $p(\tau | \theta)$ is a log-concave function of $\theta$.
Then $u_v(\tau | \theta)$ is a convex function of $\theta$ and we have:
\begin{align*}
u_{\nu}(\tau | \nu)= p(\tau | \nu) \quad &, \quad \frac{\partial u_{\nu}(\tau| \theta)}{\partial \theta}\bigg|_{\theta=\nu} = \frac{\partial p(\tau| \theta)}{\partial \theta}\bigg|_{\theta=\nu}\quad &, \quad u_{\nu}(\tau| \theta)\geq p(\tau| \theta) \enskip \forall \theta\; .
% \frac{\partial^2 u_{\nu}(\tau| \theta)}{\partial \theta^2} & \succeq 0 \enskip \forall \theta \\
\end{align*}
\end{lemma}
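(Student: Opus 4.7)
The plan is to verify the four claims separately, and the key observation that ties three of them together is that $u_\nu(\tau\mid\theta)$ is, by construction, the exponential of an affine function of $\theta$: writing $g(\tau,\nu) = \left.\frac{\partial \log p(\tau\mid\theta)}{\partial \theta}\right|_{\theta=\nu}$, we have $u_\nu(\tau\mid\theta) = \exp\bigl[\log p(\tau\mid\nu) + (\theta-\nu)^\top g(\tau,\nu)\bigr]$. Convexity then follows immediately since the exponential is convex and nondecreasing and its argument is affine in $\theta$. The log-concavity of $p(\tau\mid\cdot)$ is not needed for convexity itself; it will only be needed for the upper-bound property.

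For the value at $\theta=\nu$, plugging in gives the exponent $0$, hence $u_\nu(\tau\mid\nu) = p(\tau\mid\nu)$. For the gradient match, I would differentiate the exponential form directly: $\frac{\partial u_\nu(\tau\mid\theta)}{\partial \theta} = u_\nu(\tau\mid\theta)\, g(\tau,\nu)$, and evaluating at $\theta=\nu$ gives $p(\tau\mid\nu)\, g(\tau,\nu)$, which by the identity $\frac{\partial p}{\partial \theta} = p\,\frac{\partial \log p}{\partial \theta}$ is exactly $\left.\frac{\partial p(\tau\mid\theta)}{\partial \theta}\right|_{\theta=\nu}$.

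The substantive step, and the one that requires log-concavity, is the global inequality $u_\nu(\tau\mid\theta) \geq p(\tau\mid\theta)$. Taking logs, the desired inequality becomes
\begin{align*}
\log p(\tau\mid\nu) + (\theta-\nu)^\top g(\tau,\nu) \;\geq\; \log p(\tau\mid\theta).
\end{align*}
This is precisely the first-order tangent inequality for the concave function $\theta \mapsto \log p(\tau\mid\theta)$ at the point $\nu$: any concave function lies below its tangent hyperplane. Exponentiating both sides (valid since $\exp$ is monotone) yields the claimed bound.

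I do not anticipate a real obstacle: the only subtle point is recognising that the exponent in $u_\nu$ is exactly the linearisation of $\log p(\tau\mid\theta)$ at $\nu$, so that log-concavity immediately provides the global upper bound while the exponential wrapping supplies convexity and keeps the zeroth- and first-order values at $\nu$ correct. I would therefore present the proof in the order: (i) rewrite $u_\nu$ in the exponential-of-affine form, (ii) read off convexity and the value at $\nu$, (iii) differentiate to match gradients, (iv) invoke concavity of $\log p(\tau\mid\cdot)$ and exponentiate to obtain the upper bound.
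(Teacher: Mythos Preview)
Your proposal is correct and follows essentially the same approach as the paper: convexity from the exponential-of-affine form, the two equalities by direct evaluation at $\theta=\nu$, and the upper bound via the first-order concavity inequality for $\log p(\tau\mid\cdot)$ followed by exponentiation. Your write-up is in fact slightly more explicit than the paper's, particularly for the gradient match and in noting that log-concavity is only needed for the inequality, but the underlying argument is identical.
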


\begin{proof}
Both equalities can be verified by setting $\theta = \nu$. Since $u_{\nu}$ is the exponential of a linear function in its argument, it is convex. Finally, using the concavity of $\log p$, we have:
\begin{align*}
\log p(\tau | \theta) &\leq \log p(\tau | \nu) + (\theta - \nu)^\top \frac{\partial \log p(\tau | \theta)}{\partial \theta}\bigg|_{\theta=\nu}\\
p(\tau | \theta) &\leq p(\tau | \nu)\exp^{(\theta - \nu)^\top \frac{\partial \log p(\tau | \theta)}{\partial \theta}\bigg|_{\theta=\nu}} \; .
\end{align*}
This concludes the proof.
\end{proof}

We may now combine the two bounds to get the following lower bound on $\hat{J}(\theta)$ without any constraint on the rewards:
\begin{align}
\hat{J}(\theta) &\geq \frac{1}{N}\sum_i R(\tau_i)\frac{p(\tau_i | \nu)}{p(\tau_i | \theta_0)} z_i(\theta)\\
z_i(\theta) & = 1_{R(\tau_i) \geq 0}\left(1 + \log\frac{p(\tau_i |\theta)}{p(\tau_i | \nu)}\right) + 1_{R(\tau_i) < 0} \exp\left[(\theta - \nu)^\top \frac{\partial \log p(\tau | \theta)}{\partial \theta}\bigg|_{\theta=\nu}\right] \; .\nonumber
\end{align}
Further, when $p(\tau | \theta)$ is log-concave in $\theta$, then $z_i(\theta)$ is concave in $\theta$.

This allows us to deal with positive and negative rewards, which means that the choice of control variate is now free, which can help both in reducing the variance and improving the lower bound and thus the convergence.

\section{Experiments}
\label{sec:experiments}
To demonstrate the effectiveness of Iterative PoWER, we provide results obtained on the Cartpole benchmark using the Gym~\footnote{\url{https://gym.openai.com/}} toolkit. We also show experiments on real online advertising data where we try to maximize advertisers' revenue while keeping our costs constant.

\subsection{Cartpole benchmark}

\begin{figure}
\begin{center}
\includegraphics[width=\textwidth]{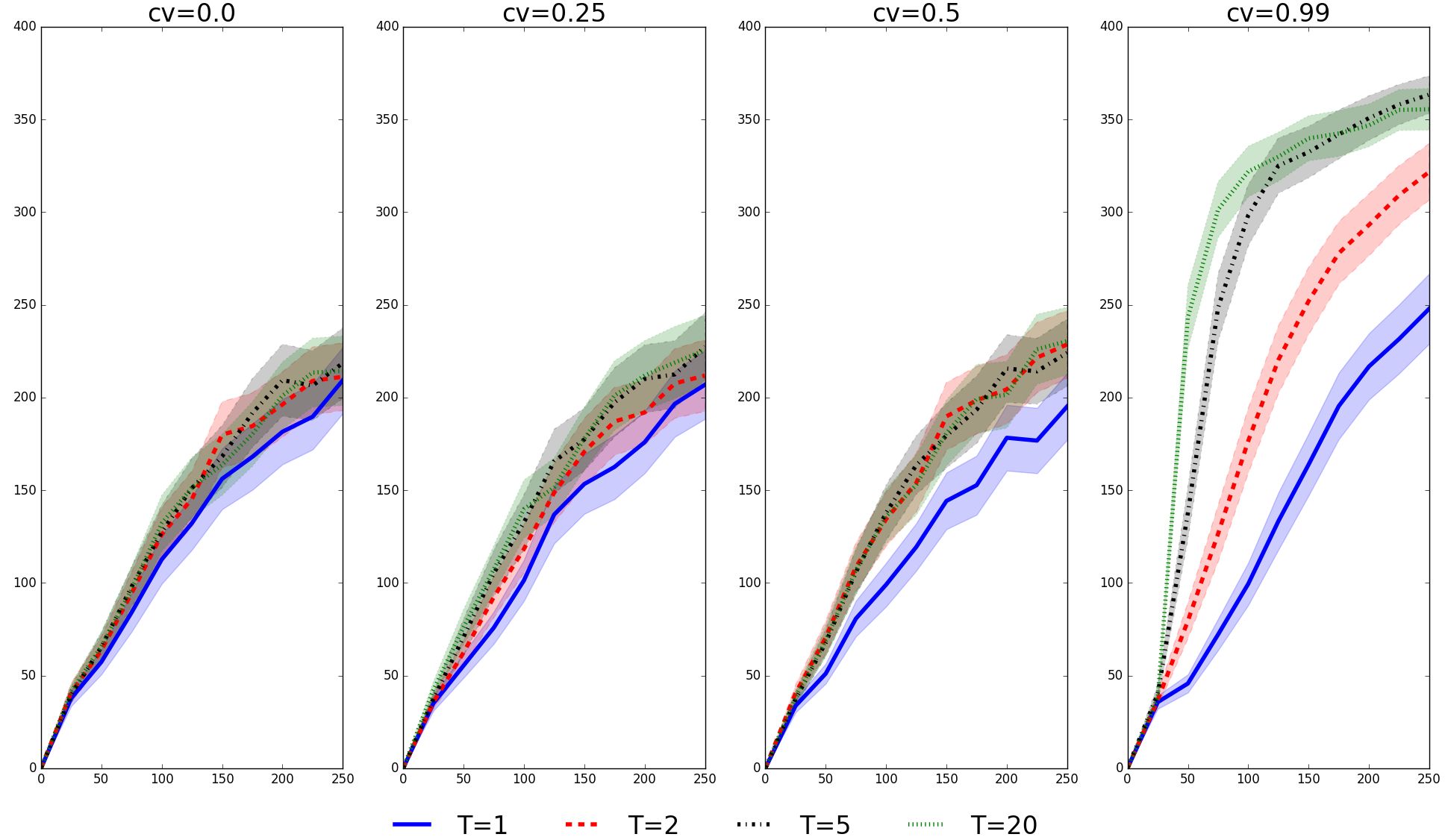}
\caption{Reward as a function of the number of rollouts for the Cartpole problem, for various values of iterations of Power and various values of the control variate. The light areas represent 3 standard deviations. The performance greatly increases with higher values of $T$, especially with a proper control variate. Except at the very beginning, there is no major difference between $T=5$ and $T=20$, leading us to think that we might suffer from variance issues despite the control variates.
\label{fig:cartpole_cv}}
\end{center}
\end{figure}

\begin{figure}
\begin{center}
\includegraphics[width=\textwidth]{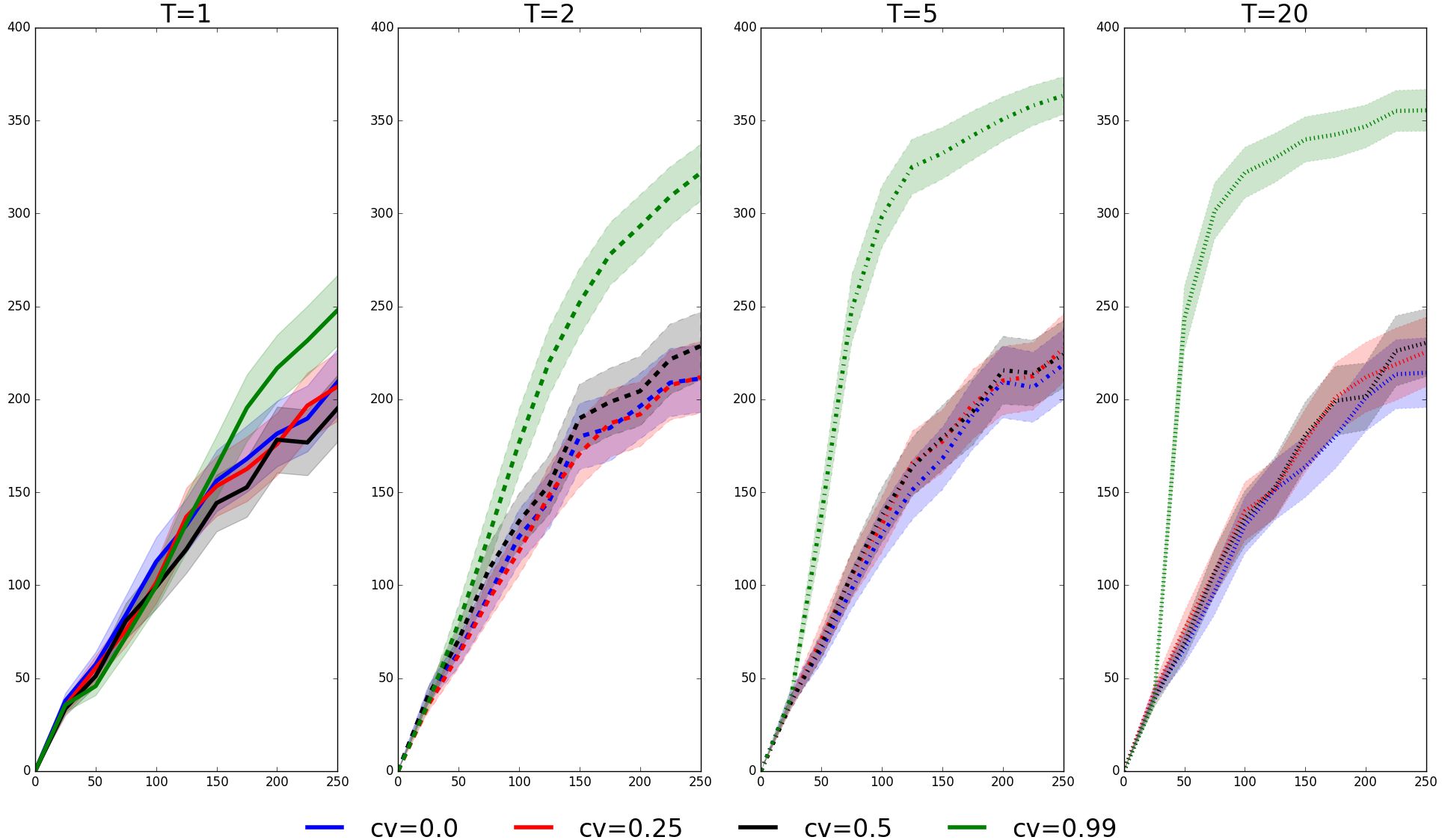}
\caption{Reward as a function of the number of rollouts for the Cartpole problem, for various values of iterations of Power and various values of the control variate. The light areas represent 3 standard deviations. Using a control variate drastically improves the performance when using large values of $T$ while it has very little impact for $T=1$ and $T=2$. This demonstrates the importance of being able to deal with negative rewards.
\label{fig:cartpole_T}}
\end{center}
\end{figure}

%\begin{figure}
%\begin{center}
%\includegraphics[width=\textwidth, height=4cm]{cartpole_lambda.png}
%\caption{Reward as a function of the number of rollouts for the Cartpole problem. Each plot represents a value of $\lambda$ and each line a different number of iterations of PoWER. The light area represent 3 standard deviations.
%\label{fig:cartpole_lambda}}
%\end{center}
%\end{figure}

To show how we can achieve good performance with limited rollouts, we ran our experiments on the Cartpole benchmark, which is solved by the PoWER method. We used a stochastic linear policy with a 4-dimensional state (positions and velocities of the cart and the pole) where, at each time step, the cart moves right with probability $\sigma(s^\top \theta)$ where $s$ is the vector representation of the state.
Each experiment was 250 rollouts, in 10 batches of 25 rollouts, of length 400. In between each batch, we retrained the parameters of the model. The average performance over 350 repetitions was computed every batch, varying the number $T$ of iterations of PoWER~\footnote{Each iteration of PoWER consisted in 5 steps of Newton method}. We capped the importance weights at 20.

We also evaluated the impact of using control variates on the convergence. To that effect, at each iteration, we computed the control variate minimizing the variance of the total estimator. We then used as control variate fractions of that value. For instance, $cv=0.5$ means that the control variate used was half of the ``optimal'' control variate found by minimizing the variance of the estimator. The results can be seen in Fig.~\ref{fig:cartpole_T} and Fig.~\ref{fig:cartpole_cv}. We see that doing several iterations of PoWER leads to an increase in performance, up to a point where the variance is too high.

We can also see that, without using control variates ($cv=0$), it is not beneficial to do too several iterations. The stagnation in performance for $cv=0.25$ and $cv=0.5$ might be due to the poorer quality of the upper bound compared to the lower bound. Thus, to get the full benefit of the control variate, we need to use larger values, such as $cv=0.99$, which yields the highest performance for any value of $T$.

\subsection{Real world data - Online advertising}
We also tested Iterative PoWER using real data gathered from a production system. In this system, every time a user lands on a webpage, an auction is run to determine who gets to display an ad to that user. Each participant to the auction chooses in real-time how much to bid for the right to display that ad. If the ad is clicked and then the user buys an item on the merchant's website, a reward equal to the value of the item is observed. Otherwise, a reward of 0 is observed. The cost of displaying an ad depends on the bid in a way unknown to us as we do not have access to the type of auction nor to the bids of the other participants.

We gathered data for over 1.3 billion displays over a period of 2 weeks in April 2016. This data was comprised of information available at the time of the bid, for instance the history of the user, the current URL, or the size of the ad to display. The aggregation of all this data represented the states. When we won the auction, we also logged our bid, which are our actions, the cost of the display and the value generated if there was a sale, both of which are used for the reward.

Rather than learning a full bidding strategy mapping from states to a bid, we used our production system as baseline and learnt a small modifier to account for the non-truthfulness of the auctions. Thus, only a few thousand parameters were learnt, a small number compared to the number of training samples. This allowed us to run many iterations of PoWER without fear of high variance.

Since our aim was to maximize the value generated, this could lead to the undesirable solution where all ads are bought regardless of their price. Thus, we included the constraint that the total cost of buying the ads had to remain constant. The details of iPoWER with added constraints are in section~\ref{sec:constrained}

Fig.~\ref{fig:adv_value} shows the relative improvement in advertiser value generated as a function of the number $T$ of iterations. We can readily see that the final improvement obtained by Iterative Power is far greater than that obtained after one iteration of PoWER (about 60 times greater). The green curve shows the change in total cost. Since we also used a lower bound for the constraint, it is initially not satisfied, but running the algorithm to convergence leads to a solution in the feasible set. The final improvement represents an increase of the net gain by several percentage points.

\begin{figure}
\begin{minipage}[c]{0.4\textwidth}
\includegraphics[width=\textwidth]{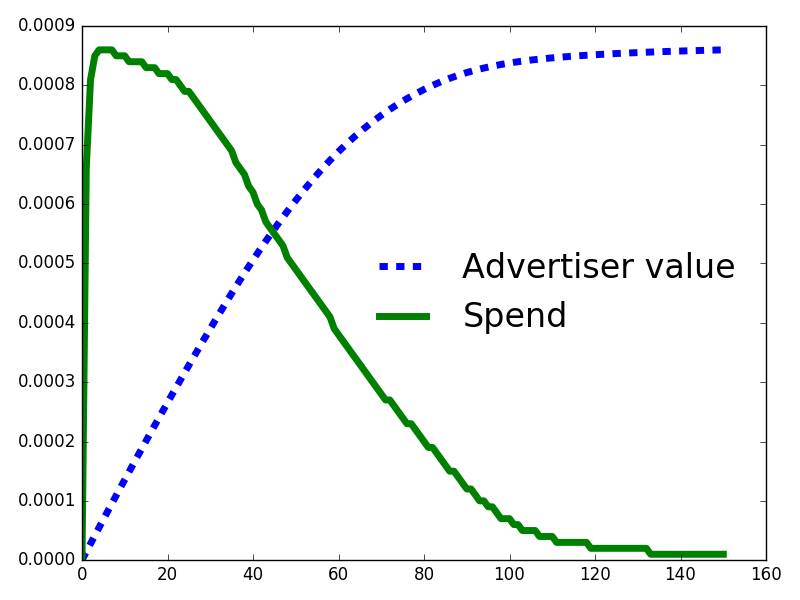}
\end{minipage}\hfill
\begin{minipage}[c]{0.58\textwidth}
\caption[caption]{\textbf{Dashed blue:} improvement in expected merchant value as a function of the number of iterations (arbitrary linear scale). The optimization is very slow at the beginning and the improvement is close to linear for the first 50 iterations.\\\textbf{Solid green:} Relative change of the expected cost as a function of the number of iterations. The expected spend increased up to $0.0086\%$ but reached $0.0001\%$ when we stopped the optimization.}
\label{fig:adv_value}
\end{minipage}
\end{figure}

\section{Extensions and conclusion}
\subsection{Application to constrained optimization}
\label{sec:constrained}
There are cases where one wishes to maximize the expected reward while satisfying a constraint on another quantity. In online advertising, for instance, it is interesting to maximize the number of clicks (or sales) while keeping the number of ads displayed constant as this reduces the potential long-term effects on user behaviour, something not captured at the scale of a rollout. To maximize the expected reward $\mathbb{E}_{\theta}[R]$  while satisfying the constraint $\mathbb{E}_{\theta}[S] = S_0$, we may add a Lagrangian term to $\hat{J}_{\nu}(\theta)$ and iteratively solve the following problem:
\begin{align*}
\max_{\theta}\min_{\alpha}\sum_i R(\tau_i) \frac{p(\tau_i | \nu)}{p(\tau_i | \theta_0)} \left(1 + \log\frac{p(\tau_i |\theta)}{p(\tau_i | \nu)}\right) + \alpha \left(\sum_i S(\tau_i) \frac{p(\tau_i | \nu)}{p(\tau_i | \theta_0)} \left(1 + \log\frac{p(\tau_i |\theta)}{p(\tau_i | \nu)}\right) - S_0\right)
\end{align*}
where $\alpha$ is the Lagrange multiplier associated with the constraint.
Due to the approximation, the constraint will not be exactly satisfied for the first few iterations but the convergence of this algorithm guarantees that the constraint will eventually be satisfied.

\subsection{Extension to non log-concave policies}
The results of this paper rely on a log-concavity assumption on the policy which can be too strong a constraint. Indeed, in many cases, the policy depends in a complex manner on the state, for instance through a deep network. However, most of these policies can still be written as a log-concave policy on a non-linear transformation of the states, for instance when the last layer of the deep network uses a softmax. iPoWER can then be used to transform the optimization problem into a simpler, albeit still not concave, maximization problem where the non-concavity of the output of the network has been removed and only remains the non-concavity of the nonlinear transformation of the state.

\subsection{Conclusion}
\label{sec:conclusion}
We proposed a modification to the PoWER algorithm which allows to improve a policy with a reduced number of rollouts. This method is particularly interesting when there are constraints on the number of rollouts, for instance in a robotic environment or when each policy has to be deployed in an industrial production system. We also proposed an extension to existing EM-based methods which allows for the use of control variates, a potentially useful tool to reduce the variance of the estimator.
However, several questions remain. In particular, experiments on the Cartpole benchmark indicate that, despite the use of capped importance weights and control variates, as we do more iterations, we might end up in regions of the space with high variance. It is thus important to use additional regularizers, such as normalized weights or penalizing the standard deviation of the estimator. To maintain the simplicity of the overall algorithm, concave lower bounds of these regularizers must also be found, which is still an open problem.

\subsubsection*{Acknowledgments}
The author would like to thank Vianney Perchet for helpful discussions.

\bibliography{../../../latex/full.bib}
\bibliographystyle{iclr2017_conference}

\end{document}